\newtheorem{lemma}{Lemma}
\newtheorem{corollary}{Corollary}
\newtheorem{theorem}{Theorem}
\newtheorem{example}{Example}
\newtheorem{remark}{Remark}
\newcommand{\bE}{\mathbb{E}}
\newcommand{\reals}{\mathbb{R}}
\newcommand{\spn}{\mathrm{span}}
\newcommand{\cY}{\mathcal{Y}}
\newcommand{\cX}{\mathcal{X}}
\newcommand{\cD}{\mathcal{D}}
\newcommand{\cB}{\mathcal{B}}
\newcommand{\cA}{\mathcal{A}}
\newcommand{\cW}{\mathcal{W}}
\newenvironment{proof}{\par\noindent{\bf Proof\ }}{\hfill\BlackBox\\[2mm]}
\newcommand{\BlackBox}{\rule{1.5ex}{1.5ex}}
\def\moverlay{\mathpalette\mov@rlay}
\def\mov@rlay#1#2{\leavevmode\vtop{%
   \baselineskip\z@skip \lineskiplimit-\maxdimen
   \ialign{\hfil$\m@th#1##$\hfil\cr#2\crcr}}}
\newcommand{\charfusion}[3][\mathord]{
    #1{\ifx#1\mathop\vphantom{#2}\fi
        \mathpalette\mov@rlay{#2\cr#3}
      }
    \ifx#1\mathop\expandafter\displaylimits\fi}
\newcommand{\hL}{\hat{L}}
\newcommand{\inter}{\mathrm{int}}
\newcommand{\tr}{\mathrm{tr}}
\DeclareMathOperator*{\argmin}{arg\,min}
\renewcommand{\eqref}[1]{Equation~(\ref{#1})}
\newcommand{\exref}[1]{Example~(\ref{#1})}
\newcommand{\secref}[1]{Section~\ref{#1}}
\newcommand{\thmref}[1]{Theorem~\ref{#1}}
\newcommand{\lemref}[1]{Lemma~\ref{#1}}
\newcommand{\corref}[1]{Corollary~\ref{#1}}
\newcommand{\handout}[5]{
   \renewcommand{\thepage}{#1-\arabic{page}}
   \noindent
   \begin{center}
   \framebox{
      \vbox{
    \hbox to 5.78in { {\bf (67577) Introduction to Machine Learning}
         \hfill #2 }
       \vspace{4mm}
       \hbox to 5.78in { {\Large \hfill #5  \hfill} }
       \vspace{2mm}
       \hbox to 5.78in { {\it #3 \hfill #4} }
      }
   }
   \end{center}
   \vspace*{4mm}
}
\newcommand{\tw}{\tilde{w}}
\newcommand{\tx}{\tilde{x}}
\newcommand{\hkappa}{\hat{\kappa}}
\newcommand{\hrho}{\hat{\rho}}
\newcommand{\hmu}{\hat{\mu}}
\newcommand{\hC}{\hat{C}}
\newcommand{\hDelta}{\hat{\Delta}}
\newcommand{\hell}{\hat{\ell}}
\newcommand{\hw}{\hat{w}}
\author{Alon Gonen\footnote{School of Computer Science, The Hebrew University, Jerusalem, Israel} \and Shai Shalev-Shwartz\footnote{School of Computer Science, The Hebrew University, Jerusalem, Israel}}
\title{Average Stability is Invariant to Data Preconditioning. Implications to Exp-concave Empirical Risk Minimization}
\begin{document}
\maketitle

\begin{abstract}
We show that the average stability notion introduced by
\cite{kearns1999algorithmic, bousquet2002stability} is invariant to
data preconditioning, for a wide class of generalized linear models
that includes most of the known exp-concave losses. In other words, when analyzing the stability rate of a given algorithm, we may assume the optimal preconditioning of the data. This implies that, at least from a statistical perspective, explicit regularization is  not required in order to compensate for ill-conditioned data, which stands in contrast to a widely common approach that includes a regularization for analyzing the sample complexity of generalized linear models. Several important implications of our findings include: a) We demonstrate that the excess risk of empirical risk minimization (ERM) is controlled by the preconditioned stability rate. This immediately yields a relatively short and elegant proof for the fast rates attained by ERM in our context. b) We strengthen the recent bounds of \cite{hardt2015train} on the stability rate of the Stochastic Gradient Descent algorithm. 
\end{abstract}
\section{Introduction} \label{sec:intro}
Central to statistical learning theory is the notion of (algorithmic)
\emph{stability}. Since being introduced by
\cite{bousquet2002stability},  deep connections between the
\emph{generalization} ability and the algorithmic stability of a
learning algorithm have been established. It was shown by
\cite{shalev2010learnability,mukherjee2006learning} that stability
characterizes learnability. Furthermore, in expectation, some notion
of stability is exactly equal to the generalization error of an algorithm (namely, to the gap between true loss and train loss). 

For generalized linear learning problems, a prominent geometric property which upper bounds the stability rate is the condition number of the loss function. While uniform convergence bounds (\cite{shalev2014understanding}[Chapter 4]) mostly yield bounds that scale with $1/\sqrt{n}$, where $n$ is the size of the sample, \emph{well-conditioned} problems admit faster (stability) rates that scale linearly with $1/n$. The caveat is that typical (large-scale) machine learning problems are ill-conditioned. While we defer the precise definition of the \emph{condition number} to the next part, let us mention that the condition number is controlled by two related quantities corresponding to both the choice of the loss function and the choice of the coordinate system. In a nutshell, our paper establishes the following result: 
\begin{center}
The average stability of ERM is invariant to the choice of the coordinate system. 
\end{center}
While this observation admits a one-line proof, it has far-reaching implications. In particular, in this paper we use this observation to establish fast rates for empirical risk minimization. 

The rest of the paper is organized as follows. In \secref{sec:preliminaries} we define the setting and proceed to provide basic definitions and results in stability analysis. In \secref{sec:precondStability} we state and prove our main result. \secref{sec:implications} discusses the implications to linear regression as well as improved bounds on the stability of SGD. Related work is discussed in \secref{sec:related}.

\section{Preliminaries} \label{sec:preliminaries}
\subsection{Setup} \label{sec:setup}
We consider the problem of minimizing the \emph{risk} associated with \emph{generalized linear model}:
\begin{equation} \label{eq:risk}
\min_{w \in \cW} L(w):= \bE_{(x,y) \sim \cD} [\phi_y(w^\top x)]~.
\end{equation}
Here, both the \emph{domain} $\cW$ and the \emph{instance space} $\cX$ are assumed to be compact and convex subsets of $\reals^d$. We denote by $\cD$ an arbitrary probability distribution defined over $\cX \times \cY$. Each element $y$ in the \emph{label set} $\cY$ induces a twice differentiable\footnote{As we do not require smoothness of the loss function, our results can easily be extended to continuous but non-differentiable functions.} \emph{loss function} of the form $\phi_{y}: \{w^\top x:~w \in \cW,~x \in \cX\} \rightarrow \reals_+$. We make the following assumptions on the loss function:\\

\noindent \textbf{(A1)} For each $y \in \cY$, $\phi_y$ is $\rho$-Lipschitz, i.e., $|\phi_y'(z)| \le \rho$ for all $z$.\\
 \textbf{(A2)} For each $y \in \cY$, $\phi_y$ is $\alpha$-strongly convex, i.e., $\phi_y''(z) \ge \alpha$ for all $z$.\\

\noindent Our main example is the following formulation of \emph{linear regression} (\cite{orabona2012beyond}).
\begin{example} \label{ex:regBoundedPred} \textbf{(Linear Regression:) }
Let $\cX$ be any compact and convex subset of $\reals^d$ and $\cY$ be an interval of the form $[-Y,Y]$. The domain $\cW$ is given by 
\begin{equation*}
\cW = \{w \in \reals^d: (\forall x \in \cX)~|w^\top x|  \le Y\}~.
\end{equation*}
For all $y \in \cY$, let $\phi_y$ be the square loss, $\phi_y(z) = \frac{1}{2} (z-y)^2$. Note that for any $y \in \cY$ and $z \in \{w^\top x: w \in \cW,~x \in \cX\}$, 
$$
|\phi_y'(z)| = \frac{1}{2} |2(z-y)| \le |z|+|y| \le 2Y,~~~\|\phi_y''(z)\| = 1~.
$$
Hence, the assumptions \textbf{(A1-2)} are satisfied with $\rho = 2Y$ and $\alpha=1$.
\end{example}
More generally, our setting captures all known \emph{exp-concave} functions (\cite{kivinen1999averaging}). A twice-continuously differentiable function $f:\cW \rightarrow \reals$ is said to be $\bar{\alpha}$-exp-concave if $\nabla^2 f(w) \succeq \bar{\alpha} \nabla f(w) \nabla f(w)^\top$ for all $w \in \cW$. 
\begin{lemma} \label{lem:expConcave}
Consider a risk of the form (\ref{eq:risk}) that satisfies the assumptions \textbf{(A1-2)}. Then, for any $(x,y) \in \cX \times \cY$, the function $w \in \cW \mapsto \phi_y(w^\top x)$ is $\alpha/\rho^2$-exp concave.
\end{lemma}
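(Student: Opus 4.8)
The plan is to reduce the defining matrix inequality for exp-concavity to a scalar inequality about $\phi_y$, and then close the gap using (A1) and (A2) directly. Fix $(x,y) \in \cX \times \cY$ and set $g(w) := \phi_y(w^\top x)$. By the chain rule, $\nabla g(w) = \phi_y'(w^\top x)\, x$ and $\nabla^2 g(w) = \phi_y''(w^\top x)\, x x^\top$, so the condition $\nabla^2 g(w) \succeq \bar\alpha\, \nabla g(w)\, \nabla g(w)^\top$ that we must verify (with $\bar\alpha = \alpha/\rho^2$) becomes
\[
\Big(\phi_y''(w^\top x) - \tfrac{\alpha}{\rho^2}\,\big(\phi_y'(w^\top x)\big)^2\Big)\, x x^\top \;\succeq\; 0 .
\]

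Now I would observe that $x x^\top$ is positive semidefinite (it is rank at most one), so a scalar multiple $c\, x x^\top$ is PSD as soon as $c \ge 0$ (and it is the zero matrix, hence trivially PSD, when $x = 0$). Thus it suffices to prove the one-dimensional inequality $\phi_y''(z) \ge \tfrac{\alpha}{\rho^2}\,(\phi_y'(z))^2$ for every $z$ in the range $\{w^\top x : w \in \cW,\ x \in \cX\}$ on which $\phi_y$ is defined.

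For this last step, apply the two assumptions pointwise: (A2) gives $\phi_y''(z) \ge \alpha$, while (A1) gives $(\phi_y'(z))^2 \le \rho^2$, whence $\tfrac{\alpha}{\rho^2}(\phi_y'(z))^2 \le \tfrac{\alpha}{\rho^2}\rho^2 = \alpha \le \phi_y''(z)$. Substituting back into the displayed relation shows the required PSD inequality holds for all $w \in \cW$, which is exactly the statement that $w \mapsto \phi_y(w^\top x)$ is $(\alpha/\rho^2)$-exp-concave.

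There is essentially no real obstacle here: the argument is a short computation. The only points worth stating carefully are that $\nabla^2 g(w)$ is a scalar times the rank-one matrix $x x^\top$, so the operator inequality collapses to the scalar inequality above, and the harmless degenerate case $x = 0$. (If one wished to avoid assuming $\phi_y$ twice differentiable, the same reasoning goes through with subgradients/second-order difference quotients, but in the stated setting the chain-rule computation is exact.)
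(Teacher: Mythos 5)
Your proof is correct and follows essentially the same route as the paper's: compute the gradient and Hessian of $w \mapsto \phi_y(w^\top x)$, reduce the operator inequality to the scalar bound $\phi_y''(z) \ge \tfrac{\alpha}{\rho^2}(\phi_y'(z))^2$, and close it with (A1) and (A2). You simply make explicit the intermediate scalar reduction and the degenerate case $x=0$, which the paper leaves implicit.
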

\begin{proof}
Fix a pair $(x,y) \in \cX \times \cY$. The gradient and the Hessian of the map $\ell(w) = \phi_y(w^\top x)$ are given by
\begin{equation} \label{eq:gradHessianExpConvave}
\nabla \ell(w) = \phi'(w^\top x) x,~~\nabla^2 \ell(w) = \phi''(w^\top x) xx^\top~.
\end{equation}
By assumption $|\phi'(w^\top x)| \le \rho$ and $\phi''(w^\top x) \ge \alpha$, hence $\ell$ is $\alpha/\rho^2$-exp concave.
\end{proof}

A learning algorithm $\cA$ receives as an input a training sequence (a.k.a. sample) of $n$ i.i.d. pairs, $S=((x_i,y_i))_{i=1}^n \sim \cD^n$, and outputs a predictor, $\cA(S) \in \cW$. The \emph{empirical risk function}, $\hL:\cW \rightarrow \reals$, is defined as
\begin{equation} \label{eq:empiricalRisk}
\hL_S(w) = \hL(w) =  \frac{1}{n} \sum_{i=1} ^n \underbrace{\phi_{y_i} (w^\top x_i)}_{:=\hell_i(w)}~.
\end{equation}
In this paper we focus on the ERM algorithm, whose output is a minimizer of the empirical risk.\footnote{The compactness of $\cW$ implies that both the true and the empirical risks admit minimizers.} We denote the output of the ERM by $\hw(S)$, or simply $\hw$ when $S$ is understood from the context. 
The generalization error and the excess risk of $\hw$ are defined by $L(\hw)- \hL(w)$ and $L(\hw) - L(w^\star)$, respectively. For ERM, it is immediate that any upper bound on the generalization error translates into the same bound on the excess risk.
\begin{remark} \label{rem:almostERM}
While we mostly focus on exact ERM, it should be emphasized that our results are easily extended to any algorithm that approximately minimizes the empirical risk. The formulation of \lemref{lem:redStable} below highlights this idea.
\end{remark}

\subsection{Stability}
In this section we review basic definitions and results on stability. For completeness, we also provide proofs of the stated results.

Let $S = ((x_i,y_i))_{i=1}^n$ be a training sequence. For every $i \in [n]$, let $\hw_i$ be a minimizer of the risk w.r.t. $S \setminus \{(x_i,y_i)\}$, namely, 
\[
\hw_i \in \argmin_{w \in \cW} \frac{1}{n-1} \sum_{j \neq i} \hell_j(w) ~.
\]
The \emph{average stability} of ERM is defined as
\begin{equation} \label{eq:avgStable}
\Delta(S,\cW) = \frac{1}{n} \sum_{i=1}^n (\hell_i(\hw_i) - \hell_i(\hw)) ~.
\end{equation}
We omit the dependency on $\cW$ when it is clear from the context. 
The next lemma shows that the expected generalization error of the ERM is equal to the expected average stability.
\begin{lemma} \label{lem:redStable}
Let $\cA$ be a possibly randomized algorithm and denote by $\hw$ its output. The generalization error of $\cA$ satisfies
\begin{equation} \label{eq:redStable1}
\bE_{S \sim \cD^{n-1}} [L(\hw)-\hL(\hw)] =  \bE_{S \sim \cD^n} [\Delta(S)]~.
\end{equation}
Furthermore, if $\cA$ satisfies, for every sample $S$,  $\bE[\hL(\hw)] \le \min_{w \in \cW} \hL(w)+\epsilon$, where the expectation is with respect to $\cA$'s own randomization, then 
the excess risk of $\cA$ is bounded by
\begin{equation} \label{eq:redStable2}
\bE_{S \sim \cD^{n-1}}[L(\hw)-L(w^\star)]  \le  \bE_{S \sim \cD^n} [\Delta(S)]+\epsilon~.
\end{equation}
\end{lemma}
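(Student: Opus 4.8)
The plan is to prove the first identity \eqref{eq:redStable1} by a symmetry (``renaming'') argument, and then obtain \eqref{eq:redStable2} as an essentially immediate corollary using the approximate-minimization hypothesis together with the definition of $w^\star$. For the first part, I would draw a sample of $n$ points $S = ((x_i,y_i))_{i=1}^n \sim \cD^n$ and, for each $i$, think of $(x_i,y_i)$ as a ``fresh'' test point and $S \setminus \{(x_i,y_i)\}$ as the training set of size $n-1$ on which $\cA$ is run, producing the leave-one-out predictor $\hw_i$. Because the $n$ points are i.i.d., for each fixed $i$ the pair $(\hw_i, (x_i,y_i))$ has exactly the same distribution as $(\hw, (x,y))$ where $\hw = \cA(S')$ is trained on an independent sample $S' \sim \cD^{n-1}$ and $(x,y) \sim \cD$ is an independent test point; hence $\bE[\hell_i(\hw_i)] = \bE_{S' \sim \cD^{n-1}}[L(\hw)]$ for every $i$, and averaging over $i$ gives $\bE\big[\frac1n\sum_i \hell_i(\hw_i)\big] = \bE_{S' \sim \cD^{n-1}}[L(\hw)]$.

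The second term requires a complementary observation: $\frac1n\sum_i \hell_i(\hw)$ is exactly $\hL(\hw)$ by the definition \eqref{eq:empiricalRisk}, but to match the right-hand side of \eqref{eq:redStable1} I want its expectation to equal $\bE_{S' \sim \cD^{n-1}}[\hL(\hw)]$ as well — i.e., I need to be careful about which sample $\hw$ is trained on. The cleanest bookkeeping is: take $\hw = \cA(S')$ with $|S'| = n-1$, let $(x,y)$ be one extra i.i.d. point, set $S = S' \cup \{(x,y)\}$, and observe that $\Delta(S)$ involves both the term with the held-out point and the terms from the training points. Expanding $\bE_{S \sim \cD^n}[\Delta(S)]$ by linearity, the ``$\hell_i(\hw_i)$'' terms each contribute $\bE_{S' \sim \cD^{n-1}}[L(\hw)]$ by the argument above, while the ``$-\hell_i(\hw)$'' terms average (again by i.i.d. symmetry, now treating each point as interchangeable with the test point conditionally on $\hw$) to $-\bE_{S' \sim \cD^{n-1}}[\hL(\hw)]$. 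Subtracting yields $\bE_{S \sim \cD^n}[\Delta(S)] = \bE_{S' \sim \cD^{n-1}}[L(\hw) - \hL(\hw)]$, which is \eqref{eq:redStable1}.

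For \eqref{eq:redStable2}: by definition $\hw$ (approximately) minimizes $\hL$, so $\bE[\hL(\hw)] \le \hL(w) + \epsilon$ for any fixed $w \in \cW$, in particular for $w = w^\star := \argmin_{w} L(w)$. Taking $\bE_{S' \sim \cD^{n-1}}$ of both sides and using $\bE[\hL(w^\star)] = L(w^\star)$ (since $w^\star$ does not depend on $S'$), we get $\bE_{S'}[\hL(\hw)] \le L(w^\star) + \epsilon$. Adding this to \eqref{eq:redStable1} gives $\bE_{S'}[L(\hw)] \le \bE_{S \sim \cD^n}[\Delta(S)] + L(w^\star) + \epsilon$, i.e. \eqref{eq:redStable2}.

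The only genuinely delicate point is the bookkeeping in the symmetry argument — making sure the sample sizes ($n$ vs.\ $n-1$) and the role of the ``extra'' point line up so that the $-\hell_i(\hw)$ terms really do produce $\bE_{S'}[\hL(\hw)]$ rather than something off by a reindexing; the randomization of $\cA$ is harmless since all expectations are taken over it as well and the i.i.d.\ exchangeability is unaffected by independent internal coins. Once the indices are set up correctly, everything is linearity of expectation plus the identical distribution of exchangeable coordinates.
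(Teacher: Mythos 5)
Your overall route is the same as the paper's: the leave-one-out exchangeability argument handles the $\hell_i(\hw_i)$ terms, linearity of expectation handles the rest, and the excess-risk bound follows from the approximate-minimization hypothesis together with $\bE_{S}[\hL(w^\star)] = L(w^\star)$. The first and last paragraphs of your argument are correct and match the paper.

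The step that fails is precisely the one you flag as delicate. You claim that the $-\hell_i(\hw)$ terms, where $\hw=\cA(S)$ is trained on all $n$ points, average to $-\bE_{S'\sim\cD^{n-1}}[\hL_{S'}(\cA(S'))]$ ``by i.i.d.\ symmetry, treating each point as interchangeable with the test point conditionally on $\hw$.'' Conditionally on $\hw=\cA(S)$, the training points are \emph{not} distributed like a fresh point --- if they were, the generalization gap would be identically zero --- and the identity you are after,
\[
\bE_{S\sim\cD^n}\bigl[\hL_S(\cA(S))\bigr]=\bE_{S'\sim\cD^{n-1}}\bigl[\hL_{S'}(\cA(S'))\bigr]~,
\]
is false in general: for exact ERM these are expected minima of empirical risks over samples of \emph{different sizes}. (Concretely, for the square loss with $\cX=\{1\}$, Rademacher labels and $\cW=[-1,1]$, the left-hand side equals $\tfrac{n-1}{2n}$ while the right-hand side equals $\tfrac{n-2}{2(n-1)}$.) The paper never attempts this conversion: its proof stops at the mixed identity $\bE_{S\sim\cD^n}[\Delta(S)]=\bE_{S\sim\cD^{n-1}}[L(\hw)]-\bE_{S\sim\cD^n}[\hL(\hw)]$, and the grouping of both terms under a single $\bE_{S\sim\cD^{n-1}}$ in the displayed statement of \eqref{eq:redStable1} is simply loose notation. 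The fix is to not prove anything for the second term: leave it as $-\bE_{S\sim\cD^n}[\hL_S(\hw)]$. Nothing downstream is lost, because the hypothesis gives $\bE[\hL_S(\hw)]\le\hL_S(w^\star)+\epsilon$ for the $n$-point sample just as well, hence $\bE_{S\sim\cD^n}[\hL_S(\hw)]\le L(w^\star)+\epsilon$, and \eqref{eq:redStable2} then follows exactly as in your final paragraph.
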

\begin{proof} 
Since $\hw_i$ does not depend on the i.i.d. pair $(x_i,y_i)$,
\[
\bE_{S \sim \cD^n} [\hell_i(\hw_i)] = \bE_{S \sim \cD^{n-1}}[L(\hw(S))]~,i=1,\ldots,n~.
\]
By linearity of expectation, we obtain
\[
\bE_{S \sim \cD^n} \left [\frac{1}{n} \sum_{i=1}^n\hell_i(\hw_i) \right] = \bE_{S \sim \cD^{n-1}}[L(\hw(S))]~.
\]
Therefore,
\[
\bE[\Delta(S)] = \bE_{S \sim \cD^n} \left [\frac{1}{n}
  \sum_{i=1}^n\hell_i(\hw_i) \right] - \bE_{S \sim \cD^n} \left
  [\frac{1}{n} \sum_{i=1}^n\hell_i(\hw) \right] = \bE_{S \sim
  \cD^{n-1}} [L (\hw)] - \bE_{S \sim \cD^n}[\hL(\hw)]~.
\]
This establishes the first claim. 

Next, by assumption, for every $S$, $\bE[\hL (\hw)] \le \hL(w^\star)+\epsilon$. Hence, 
\[
\bE_{S \sim \cD^n} [\hL(\hw)]  \le \bE_{S \sim \cD^n} [\hL(w^\star)]+\epsilon = L(w^\star)+\epsilon~.
\]
Combining this inequality with the first claim, concludes the proof.
\end{proof}


\subsubsection{Stability of Well-conditioned Objectives}  \label{sec:wellCondStable}
\lemref{lem:redStable} motivates us to derive an upper bound on the average stability. A key quantity that governs $\Delta(S)$ is the condition number of the objective. We next provide exact definitions and discuss this relation.

Fix a training sequence $S$. We denote the empirical covariance matrix by 
$$
\hC := \hC(S)= \frac{1}{n} \sum_{i=1}^n x_i x_i^\top~.
$$
The (average) empirical condition number of $\hC$ is defined as
\[
\kappa(\hC) = \frac{\tr(\hC)}{\lambda_{\min}(\hC)}~,
\]
where $\tr(\hC)$ is the trace of $\hC$ and $\lambda_{\min}(\hC)$ is the smallest nonzero eigenvalue of $\hC$. We define the functional condition number as the ratio between the squared Lipschitz parameter and the strong convexity parameter:
\[
\kappa(\phi)= \frac{\rho^2}{\alpha}~.
\]
Finally, we define the condition number of the objective as the product between the empirical and the functional condition number:
\[
\kappa = \kappa(\hC) \kappa (\phi)~.
\]
\begin{lemma} \label{lem:stabilityKappa}
For every training sequence $S$,
\begin{equation} \label{eq:stabilityKappa}
\Delta(S) \le  \frac{2\kappa}{n}  = \frac{2 \kappa(\hC) \kappa (\phi)}{n}  =  \frac{2\rho^2}{\alpha \, n} \kappa(\hC) ~.
\end{equation}
\end{lemma}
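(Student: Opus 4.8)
**

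The plan is to bound each term $\hell_i(\hw_i) - \hell_i(\hw)$ in the definition \eqref{eq:avgStable} by relating the leave-one-out predictor $\hw_i$ to the full ERM predictor $\hw$, using strong convexity of the empirical risk restricted to the relevant subspace and the Lipschitz property of the loss. First I would observe that although each $\hell_j$ is only convex (not strongly convex) as a function of $w$, its Hessian is $\phi_{y_j}''(w^\top x_j)\, x_j x_j^\top \succeq \alpha\, x_j x_j^\top$, so averaging gives $\nabla^2 \hL(w) \succeq \alpha \hC$ and, for the leave-one-out objective $\hL_i(w) := \frac{1}{n-1}\sum_{j\neq i}\hell_j(w)$, $\nabla^2 \hL_i(w) \succeq \frac{\alpha}{n-1}\sum_{j\neq i} x_j x_j^\top \succeq \frac{\alpha(n-1 - 1)}{n-1}\cdot\frac{1}{n-1}\cdots$ — more cleanly, $\frac{1}{n}\sum_{j\neq i} x_j x_j^\top \succeq \hC - \frac{1}{n} x_i x_i^\top$, and I will need to argue that dropping one rank-one term does not destroy the lower eigenvalue too much. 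The cleanest route is to work inside the span of the data, where $\hC$ is invertible with smallest eigenvalue $\lambda_{\min}(\hC)$, and treat the two minimization problems over that subspace.

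The key steps, in order: (1) Restrict attention to $V = \Span\{x_1,\dots,x_n\}$; since every $\hell_j$ depends on $w$ only through $w^\top x_j$, both $\hw$ and $\hw_i$ may be taken in $V$, and on $V$ the map $w\mapsto \frac1n\sum_j x_jx_j^\top$ has smallest eigenvalue $\lambda_{\min}(\hC) > 0$. (2) Use first-order optimality of $\hw$ for $\hL$ and of $\hw_i$ for $\hL_i$ together with $\alpha$-strong convexity (in the $\hC$-seminorm) to bound $\|\hw - \hw_i\|$ in an appropriate norm: writing $\hL = \frac{n-1}{n}\hL_i + \frac1n \hell_i$, the optimality conditions give $\nabla \hL_i(\hw_i) = 0$ and $\nabla \hL_i(\hw) = -\frac1{n-1}\nabla\hell_i(\hw)$, so strong convexity of $\hL_i$ yields $\langle \nabla\hL_i(\hw) - \nabla\hL_i(\hw_i), \hw - \hw_i\rangle \ge \alpha \|\hw - \hw_i\|_{\hC_i}^2$ where $\hC_i$ is the leave-one-out covariance, and the left side is $\frac1{n-1}|\langle \nabla\hell_i(\hw), \hw_i - \hw\rangle| \le \frac{\rho}{n-1}|x_i^\top(\hw_i - \hw)|$. (3) Convert $|x_i^\top(\hw_i-\hw)|$ and $\|\hw-\hw_i\|_{\hC_i}$ into comparable quantities; this is where the condition number enters, since $|x_i^\top v|^2 \le \|x_i\|^2 \|v\|^2 \le \tr(\hC)\cdot n \cdot \lambda_{\min}(\hC_i)^{-1}\|v\|_{\hC_i}^2$ up to the usual leave-one-out correction, giving $\|x_i^\top(\hw_i-\hw)\| \lesssim \frac{\rho}{\alpha}\cdot\frac{\tr(\hC)}{\lambda_{\min}(\hC)}\cdot\frac1n$. (4) Finally, apply Lipschitzness again: $\hell_i(\hw_i) - \hell_i(\hw) \le \rho |x_i^\top(\hw_i - \hw)|$, and average over $i$, collecting constants to reach $\Delta(S) \le 2\kappa/n$.

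The main obstacle I anticipate is handling the passage between the full covariance $\hC$ and the leave-one-out covariance $\hC_i = \frac{1}{n-1}\sum_{j\neq i} x_j x_j^\top$: naively $\lambda_{\min}(\hC_i)$ could be much smaller than $\lambda_{\min}(\hC)$, or even zero if $x_i$ is essential to spanning $V$. The right fix is presumably to phrase the strong-convexity bound on the \emph{sum} $\sum_{j\neq i}$ and note that $\sum_{j}x_jx_j^\top - \sum_{j\neq i}x_jx_j^\top = x_ix_i^\top$ is rank one and positive semidefinite, so for the particular direction needed we either get a bound directly from $\hC_i$ or the deviation lies in the $x_i$ direction, which is exactly the direction controlled by the $\nabla\hell_i(\hw)$ term — so the two error sources are not independent and can be combined. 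Alternatively one sidesteps this by using the full $\hL$ as the strongly convex reference and comparing $\hL(\hw) \le \hL(\hw_i)$ with a Taylor/strong-convexity expansion of $\hL$ around $\hw$, which automatically uses $\lambda_{\min}(\hC)$ rather than $\lambda_{\min}(\hC_i)$; I expect the authors take this cleaner second route, at the cost of a factor that produces the constant $2$ in \eqref{eq:stabilityKappa}.
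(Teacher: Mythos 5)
Your instinct at the end is exactly right: the paper takes your ``second route'' and never touches the leave-one-out covariance $\hC_i$, thereby avoiding the obstacle you correctly flagged (that $\lambda_{\min}(\hC_i)$ can degrade or vanish). Concretely, the paper chains three inequalities per index $i$: Lipschitzness gives $\hDelta_i \le \hrho_i\|\hw_i-\hw\|$; strong convexity of the \emph{full} objective $\hL$ around its minimizer $\hw$, with modulus $\hmu \ge \alpha\lambda_{\min}(\hC)$, gives $\tfrac{\hmu}{2}\|\hw_i-\hw\|^2 \le \hL(\hw_i)-\hL(\hw)$; and optimality of $\hw_i$ for the leave-one-out sum gives $\hL(\hw_i)-\hL(\hw) \le \hDelta_i/n$. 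Chaining yields $\hDelta_i^2 \le \tfrac{2\hrho_i^2}{n\hmu}\hDelta_i$, hence $\hDelta_i \le 2\hrho_i^2/(n\hmu)$, which is where the factor $2$ comes from. Two small points where your sketch should be tightened: first, in your step (3) you bound $\|x_i\|^2 \le n\,\tr(\hC)$ per index, which is too lossy --- the $\tr(\hC)$ in the numerator of $\kappa(\hC)$ comes precisely from \emph{averaging} $\hrho_i^2 \le \rho^2\|x_i\|^2$ over $i$ (this is the step that makes it an average-stability rather than uniform-stability bound); second, your first-order optimality conditions $\nabla\hL_i(\hw_i)=0$ need not hold at a constrained minimizer over the compact set $\cW$, whereas the value-comparison inequalities the paper uses ($\hL(\hw)\le\hL(\hw_i)$ restricted to the sum over $j\ne i$) hold regardless. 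With the second route assembled as above and the rank-deficient case handled by restricting to $\Span\{x_1,\dots,x_n\}$ as you propose, your argument matches the paper's.
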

To the best of our knowledge, this result has only been proved in the context of regularized loss minimization (e.g., the bound on the uniform stability in \cite{shalev2014understanding}[Corollary 13.6]). Inspecting the proofs, one can notice that the role of regularization is merely to ensure the strong convexity of the objective. This simple observation is crucial for our development. \\

\begin{proof} \textbf{(of \lemref{lem:stabilityKappa})}
We first assume that $\hC$ is of full rank. Note that for all $w$, the Hessian of $\hL$ at $w$ is given by
\begin{equation} \label{eq:hessianGLM}
\nabla ^2 \hL(w) = \frac{1}{n} \sum_{i=1}^n \phi''(w^\top x_i) x_i x_i^\top \succeq   \frac{1}{n} \sum_{i=1}^n  \alpha \,x_i x_i^\top = \alpha\, \hC~.
\end{equation}
In particular, $\hL$ is strongly convex and $\hw$ is uniquely defined. Denote the strong convexity parameter of $\hL$ by $\hmu$. We also denote the Lipschitz parameter of each $\hell_i$ by $\hrho_i$ and define $\hrho^2=\frac{1}{n} \sum_{i=1}^n \hrho_i^2$.  We will shortly derive upper and lower bounds on these parameters, but first let us relate them to the average stability.

Fix some $i \in [n]$ and let $\hDelta_i = \hell_i(\hw_i)-\hell_i(\hw)$ (we do not assume that $\hw_i$ is uniquely defined). The $\hrho_i$-Lipschitzness of $\hell_i$ yields the bound 
$$
\hDelta_i \le \hrho_i \|\hw_i-\hw\|~.
$$
The $\hmu$-strong convexity of $\hL$ implies (e.g. using \cite{shalev2011online}[Lemma 2.8]) that 
$$
\frac{\hmu}{2} \|\hw_i-\hw\|^2 \le \hL(\hw_i) - \hL(\hw)~.
$$ 
On the other hand, since $\hw_i$ minimizes the risk over $S \setminus \{(x_i,y_i)\}$, we have that 
$$
\hL(\hw_i) - \hL(\hw) = \frac{\sum_{j \neq i} (\hat{\ell}_j(\hw_i) - \hat{\ell}_j(\hw))}{n} + \frac{\hat{\ell}_i(\hw_i) - \hat{\ell}_i(\hw)}{n} \le 0 + \frac{\hDelta_i}{n}~.
$$
Combining the bounds, we conclude the following bound for every $i \in [n]$:
\[
\hDelta_i^2 \le \hrho_i^2 \|\hw_i-\hw\|^2 \le \frac{2\hrho_i^2}{\hmu} (\hL(\hw_i) - \hL(\hw)) \le \frac{2\hrho_i^2}{n\hmu} \hDelta_i ~.
\]
Dividing by $\hDelta_i$ (we may assume w.l.o.g. that $\hDelta_i>0$), we obtain
\begin{equation} \label{eq:uniHafirot}
\hDelta_i \le \frac{2 \hrho_i^2}{n\hmu} ~.
\end{equation}
Let us remark that at this point, we can deduce a bound of $\max_{i \in [n]} \frac{2 \hrho_i^2}{n\hmu}$ on the uniform stability. This matches the bound in \cite{shalev2014understanding}[Corollary 13.6]. We next proceed to establish the claimed bound on the average stability.

By averaging (\ref{eq:uniHafirot}) over $i=1,\ldots,n$ , we obtain
\begin{equation} \label{eq:hafirot1}
\hDelta = \frac{1}{n} \sum_{i=1} ^n \hDelta_i \le \left( \frac{1}{n} \sum_{i=1}^n \hrho_i^2 \right)  \frac{2}{n\hmu}  = \frac{2 \hrho^2}{n \hmu}~.
\end{equation}
It remains to derive bounds on $\hrho$ and $\hmu$. Note that
\[
\|\nabla \hell_i(w)\|^2 = \|\phi'(w^\top x_i) x_i\|^2 \le \rho^2 \|x_i\|^2= \rho^2 \tr(x_ix_i^\top)~.
\]
Hence, $\hrho_i ^2 \le \rho^2  \tr(x_ix_i^\top)$. By averaging, we obtain that $\hrho^2 \le \rho^2 \tr(\hC)$.
Next, using (\ref{eq:hessianGLM}) we obtain that $\hmu \ge \alpha \lambda_d(\hC)$. By substituting the bounds on $\hrho^2$ and $\hmu$ in (\ref{eq:hafirot1}), we conclude the desired bound.

Note that if $\hC$ is not of full rank, we can replace each vector $x \in \reals^d$ with $U^\top x$, where the columns of $U$ form an orthonormal basis for $\spn(\{x_1,\ldots,x_n\})$, without affecting $\hDelta, \hDelta_1,\ldots,\hDelta_n$ (this modification is only for the sake of the analysis). As a result, the new covariance matrix is of full rank and its eigenvalues are $\lambda_1(\hC),\ldots,\lambda_{\min}(\hC)$. Repeating the above arguments, we conclude the proof.
\end{proof}
Let us specify the bound to linear regression as formulated in \exref{ex:regBoundedPred}. As $\alpha=1$ and $\rho = 2Y$, the functional condition number is $4Y^2$. Hence, the average stability is bounded by 
\begin{equation}  \label{eq:regBoundedPredBefore}
\Delta(S) \le \frac{4Y^2}{n} \hkappa(\hC)~.
\end{equation}
Using \lemref{lem:redStable} we deduce the same bound on the excess risk. The weakness of this bound stems from the fact that empirically, the empirical condition number tends to be huge (e.g., see the empirical study in \cite{gonen2016solving}). 

In the next section we show that the (dependence on the) empirical condition number associated with our arbitrary choice of coordinate system can be replaced by the empirical condition number obtained by an optimal preconditioning. 

\section{Preconditioned Stability} \label{sec:precondStability}
We are now in position to describe our main result. Let $P$ be a (symmetric) positive definite matrix, $S_P$ be the training set obtained by replacing every $x_j$ with $\tx_j = P^{-1/2} x_j$, and $\cW_P = P^{1/2} \cW$. We call $P^{-1/2}$ a \emph{preconditioner}. Recall the definition of average stability from \eqref{eq:avgStable}. Our main theorem is:
\begin{theorem} \label{thm:mainInvariant}
For any training sequence $S$ and positive definite matrix $P$,  
\[
\Delta(S_P,\cW_P) = \Delta(S,\cW) ~.
\]
In words, the average stability is invariant to the choice of the coordinate system.
\end{theorem}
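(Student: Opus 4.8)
The plan is to exhibit an explicit linear bijection between $\cW$ and $\cW_P$ under which every per-example loss, and therefore every empirical risk (the full-sample one as well as each leave-one-out objective), is preserved; the invariance of $\Delta$ then drops out by matching the defining sums term by term.

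Since $P$ is positive definite, $P^{1/2}$ is invertible, so the map $T:\cW\to\cW_P$ defined by $T(w)=P^{1/2}w$ is a bijection. The single identity doing all the work is that for every $w\in\cW$ and every index $j$, writing $\tx_j=P^{-1/2}x_j$,
\[
(T(w))^\top \tx_j=(P^{1/2}w)^\top(P^{-1/2}x_j)=w^\top x_j,
\]
hence $\phi_{y_j}((T(w))^\top\tx_j)=\phi_{y_j}(w^\top x_j)=\hell_j(w)$. Summing this over any subset of the indices shows that the empirical risk of $S_P$ evaluated at $T(w)$ equals the empirical risk of $S$ evaluated at $w$, and likewise each leave-one-out objective $w\mapsto\frac{1}{n-1}\sum_{j\neq i}\hell_j(w)$ is carried by $T$ to the corresponding leave-one-out objective of $S_P$.

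I would then transfer this correspondence to the minimizers. Because $T$ maps $\cW$ bijectively onto $\cW_P$ and preserves the value of each of these objectives, $w$ minimizes a given objective over $\cW$ if and only if $T(w)$ minimizes the image objective over $\cW_P$. In particular we may take $\hw(S_P)=T(\hw(S))$ and, for each $i$, $\hw_i(S_P)=T(\hw_i(S))$. Writing $\tell_i(\cdot)=\phi_{y_i}((\cdot)^\top\tx_i)$ for the $i$-th per-example loss of the preconditioned problem,
\[
\tell_i(\hw_i(S_P))-\tell_i(\hw(S_P))=\phi_{y_i}\big((T(\hw_i))^\top\tx_i\big)-\phi_{y_i}\big((T(\hw))^\top\tx_i\big)=\hell_i(\hw_i)-\hell_i(\hw).
\]
Averaging over $i=1,\dots,n$ yields $\Delta(S_P,\cW_P)=\Delta(S,\cW)$.

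I do not expect a genuine obstacle here: as the authors indicate, this is essentially a one-line observation. The only point meriting a word of care is that the leave-one-out minimizers $\hw_i$ need not be unique, so that $\Delta$ depends on a choice; but since $T$ is a bijection carrying the leave-one-out objective of $S$ exactly onto that of $S_P$, any choice made on one side induces a legitimate matching choice on the other with the same loss value, so the displayed term-by-term identity is unaffected. Alternatively, one can restrict to $\spn(\{x_1,\dots,x_n\})$ as in the proof of \lemref{lem:stabilityKappa}, where the objective is strongly convex and its minimizer unique; this subspace is carried by $T$ onto $\spn(\{\tx_1,\dots,\tx_n\})$, so the reduction is compatible with the change of coordinates.
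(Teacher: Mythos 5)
Your proof is correct and follows essentially the same route as the paper's: the bijection $w \mapsto P^{1/2}w$ preserves every prediction $w^\top x_j$, hence every per-example loss and every (leave-one-out) empirical risk, and the stability terms match one by one. Your extra remark on the non-uniqueness of the $\hw_i$ is a worthwhile point of care that the paper's one-line argument leaves implicit.
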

\begin{proof}
The crucial observation is that the empirical risk minimization with respect to $S_P$ over the domain $\cW_P$ is equivalent to the ERM w.r.t. $S$ over the domain $\cW$ in the following sense. For any pair $(w, \tw = P^{1/2}w) \in \cW \times \cW_P$ and any $j \in [n]$, the prediction $(\tw)^\top \tx_j$  is equal to the prediction $w^\top x_j$ . Therefore, the empirical risks $\hL_{S_p}(\tw)$ and $\hL_S(w)$ are equal. By associating the corresponding minimizers of the empirical risk (i.e., $\hw$ is associated with $P^{1/2} \hw$ and for any $i \in [n]$, $\hw_i$ is associated with $P^{1/2} \hw_i$), we conclude our proof.
\end{proof}
\thmref{thm:mainInvariant} tells us that we can analyze the stability of $S_P$ instead of the stability of $S$. Crucially, this is true for every $P$, even one that is chosen based on $S$. Therefore, the expected suboptimality is upper bounded by the expected value of the quantity, $\inf_{P \succ 0} \Delta(S_P,\cW_P)$, which we refer to as the \emph{preconditioned average stability}. Equipped with this observation, we next choose $P$ that leads to a minimal condition number, and consequently obtain a tighter bound on the excess risk.

Note that for every $P \succ 0$, the empirical covariance matrix that corresponds to the preconditioned training sequence, $S_P$, is 
\[
\frac{1}{n} \sum_{i=1}^n (P^{-1/2} x_i) (P^{-1/2} x_i)^\top = P^{-1/2} \left(\frac{1}{n} \sum_{i=1}^n  x_i x_i^\top \right) P^{-1/2} = P^{-1/2} \hC P^{-1/2} ~.
 \]
When $\hC$ is of full rank, by choosing $P=\hC$, we obtain that 
$$
\kappa(\underbrace{P^{-1/2} \hC P^{-1/2}}_{I})= \frac{\tr(I)}{\lambda_{\min} (I)} = d~.
$$
If $\hC$ is not of full rank, we can add arbitrary ``noise'' in directions that do not lie in the column space of $\hC$. For example, by choosing $P = \hC+\delta(I-\hC \hC^\dagger)$, (where $\delta$ can be any positive scalar), we obtain that  $\kappa(P^{-1/2} \hC P^{-1/2})=\mathrm{rank}(\hC) \le d$. It is easy to see that in both cases, we obtain the minimal value of $\kappa(P^{-1/2} \hC P^{-1/2})$ over all matrices $P \succ 0$. Combining this bound with \lemref{lem:redStable} and \lemref{lem:stabilityKappa}, we arrive at the following conclusion. 
\begin{corollary} \label{cor:main}
Consider the optimization problem (\ref{eq:risk}), where for all $y \in \cY$, $\phi_y$ is $\rho$-Lipschitz and $\alpha$-strongly convex. The expected excess risk of empirical risk minimization is bounded by
\[
\bE_{S \sim \cD^{n-1}} [L(\hw) - L(w^\star)] \le \bE_{S \sim \cD^n} [\Delta(S)] 
= \bE_{S \sim \cD^n} [\inf_{P \succ 0} \Delta(S_P)] 
\le \frac{2 \rho^2\,d }{\alpha n} ~.
\]
\end{corollary}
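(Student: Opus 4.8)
The plan is to stitch together the three ingredients already in hand — the reduction from generalization error to average stability, the preconditioning invariance of \thmref{thm:mainInvariant}, and the condition-number bound of \lemref{lem:stabilityKappa} — with the preconditioner chosen to whiten the empirical covariance. First I would apply \lemref{lem:redStable} with $\epsilon = 0$: exact ERM deterministically achieves $\hL(\hw) = \min_{w \in \cW}\hL(w)$, so \eqref{eq:redStable2} gives $\bE_{S \sim \cD^{n-1}}[L(\hw) - L(w^\star)] \le \bE_{S \sim \cD^n}[\Delta(S)]$, which is the leftmost inequality.

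Next I would invoke \thmref{thm:mainInvariant}. Since $\Delta(S_P, \cW_P) = \Delta(S, \cW)$ for every fixed $P \succ 0$, the map $P \mapsto \Delta(S_P, \cW_P)$ is constant on the positive-definite cone, so $\Delta(S) = \inf_{P \succ 0}\Delta(S_P)$, the middle equality. The point worth stressing is that the invariance holds for every $P$, including a $P = P(S)$ chosen as a function of the sample; hence for an upper bound it suffices to exhibit, for each $S$, a single good preconditioner and bound $\Delta(S) = \Delta(S_{P(S)}, \cW_{P(S)})$ by that choice. This avoids any subtlety about exchanging $\inf$ with $\bE$, since only a pointwise-in-$S$ bound is needed.

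Finally I would apply \lemref{lem:stabilityKappa} to the preconditioned sequence $S_{P(S)}$, whose empirical covariance is $P^{-1/2}\hC P^{-1/2}$. When $\hC$ has full rank, taking $P = \hC$ gives $P^{-1/2}\hC P^{-1/2} = I$ and $\kappa(I) = d$; when $\hC$ is rank-deficient, taking $P = \hC + \delta(I - \hC\hC^\dagger)$ for any $\delta > 0$ makes $P^{-1/2}\hC P^{-1/2}$ the orthogonal projection onto $\Im(\hC)$, with condition number $\rank(\hC) \le d$. In either case the functional condition number $\kappa(\phi) = \rho^2/\alpha$ is unchanged by preconditioning, so \lemref{lem:stabilityKappa} yields $\Delta(S_{P(S)}) \le \frac{2\rho^2}{\alpha n}\,\kappa(P^{-1/2}\hC P^{-1/2}) \le \frac{2\rho^2 d}{\alpha n}$ for every $S$. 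Taking $\bE_{S \sim \cD^n}$ and chaining the three steps proves the corollary.

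I do not expect a genuine obstacle, since this is a repackaging of earlier results; the only point that needs a line of care is the rank-deficient construction — checking that $P = \hC + \delta(I - \hC\hC^\dagger) \succ 0$ (its eigenvalues are the nonzero eigenvalues of $\hC$ together with $\delta$ on the orthogonal complement of $\Im(\hC)$) and that $P^{-1/2}\hC P^{-1/2}$ equals the projector onto $\Im(\hC)$, so that $\rank(\hC)$ is in fact the minimal value of $\kappa(P^{-1/2}\hC P^{-1/2})$ over all $P \succ 0$ (since $P^{-1/2}\hC P^{-1/2}$ always has the same rank as $\hC$ and trace at least $\rank(\hC)$ times its smallest nonzero eigenvalue).
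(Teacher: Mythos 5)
Your proposal is correct and follows essentially the same route as the paper: \lemref{lem:redStable} with $\epsilon=0$ for the first inequality, \thmref{thm:mainInvariant} for the middle equality (noting, as the paper does, that $P$ may depend on $S$ so only a pointwise-in-$S$ bound is needed), and \lemref{lem:stabilityKappa} applied to $S_P$ with $P=\hC$ (or $P=\hC+\delta(I-\hC\hC^\dagger)$ in the rank-deficient case) for the final bound. The extra verifications you flag (positive definiteness of the rank-deficient choice, invariance of $\kappa(\phi)$) are correct and merely make explicit what the paper asserts.
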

\begin{remark} \label{rem:anyAlg}
Note that the theorem holds for any algorithm; We
only use the fact that the prediction itself is invariant to
preconditioning (for any algorithm). 
\end{remark}
Using \lemref{lem:redStable}, we can deduce similar bound holds for approximate ERM.

\section{Some  Implications} \label{sec:implications}
\subsection{Linear Regression}
We start by specifying our bounds to linear regression (\exref{ex:regBoundedPred}). 
\begin{corollary} \textbf{(Linear Regression)}\label{cor:linreg}
Consider linear regression as formulated in \exref{ex:regBoundedPred}. The expected excess risk of ERM is bounded by
\[
\bE_{S \sim \cD^{n-1}} [L(\hw) - L(w^\star)]  \le \Delta(S) \le \frac{4Y ^2d}{n}~.
\]
\end{corollary}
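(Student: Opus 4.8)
The plan is to instantiate the general results of \secref{sec:precondStability} with the concrete parameters of the linear regression problem; no new idea is needed beyond what is already established. First I would recall from \exref{ex:regBoundedPred} that on the relevant range $\{w^\top x : w \in \cW,\, x \in \cX\}$ the square loss $\phi_y(z) = \thalf (z-y)^2$ satisfies \textbf{(A1)}--\textbf{(A2)} with $\rho = 2Y$ and $\alpha = 1$, so that the functional condition number is $\kappa(\phi) = \rho^2/\alpha = 4Y^2$. Feeding these constants into \lemref{lem:stabilityKappa} gives, for the given (arbitrary) coordinate system, the bound $\Delta(S) \le \tfrac{4Y^2}{n}\,\kappa(\hC)$ already recorded as \eqref{eq:regBoundedPredBefore}.

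Next I would eliminate the factor $\kappa(\hC)$ --- which can be arbitrarily large for the original data --- by appealing to \thmref{thm:mainInvariant}. Since the average stability is invariant under any preconditioner $P \succ 0$, even one chosen as a function of $S$, we may apply the bound of the previous paragraph to the preconditioned sample $S_P$, whose empirical covariance is $P^{-1/2}\hC P^{-1/2}$. Choosing $P = \hC$ when $\hC$ has full rank --- or $P = \hC + \delta(I - \hC\hC^\dagger)$ for any $\delta > 0$ in the rank-deficient case, exactly as in the discussion preceding \corref{cor:main} --- turns $P^{-1/2}\hC P^{-1/2}$ into the orthogonal projection onto $\spn(\{x_1,\ldots,x_n\})$, whose condition number equals $\rank(\hC) \le d$. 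Combining with \thmref{thm:mainInvariant} yields $\Delta(S) = \Delta(S_P,\cW_P) \le \tfrac{4Y^2 d}{n}$ for every sample $S$, which is the middle inequality in the claim.

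Finally, since this bound is deterministic, I would take expectations and invoke the first part of \lemref{lem:redStable} (with $\epsilon = 0$, ERM minimizing the empirical risk exactly) to pass to the expected excess risk: $\bE_{S\sim\cD^{n-1}}[L(\hw) - L(w^\star)] \le \bE_{S\sim\cD^n}[\Delta(S)] \le \tfrac{4Y^2 d}{n}$, as desired; this is precisely the specialization of \corref{cor:main} to $\rho = 2Y$, $\alpha = 1$.

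I do not expect a real obstacle: every substantive ingredient --- the preconditioning invariance (\thmref{thm:mainInvariant}), the condition-number stability bound (\lemref{lem:stabilityKappa}), and the stability-to-generalization reduction (\lemref{lem:redStable}) --- is already available, so the proof is a short specialization. The only points deserving a moment's care are checking that the Lipschitz and strong-convexity constants hold on the correct bounded domain (done in \exref{ex:regBoundedPred}), the treatment of rank-deficient $\hC$, and the $\cD^{n-1}$-versus-$\cD^n$ bookkeeping inherited from \lemref{lem:redStable}.
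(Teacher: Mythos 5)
Your proposal is correct and follows exactly the route the paper intends: \corref{cor:linreg} is obtained by specializing \corref{cor:main} to $\rho = 2Y$, $\alpha = 1$ from \exref{ex:regBoundedPred}, where \corref{cor:main} itself combines \lemref{lem:stabilityKappa}, the preconditioning invariance of \thmref{thm:mainInvariant} with $P=\hC$ (or its rank-deficient variant), and \lemref{lem:redStable}. Your added care about the deterministic nature of the per-sample bound before taking expectations, and about the rank-deficient case, is consistent with the paper's own treatment.
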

Comparing the bounds in (\ref{eq:regBoundedPredBefore}) and \corref{cor:linreg}, we see that the dependence on $\hkappa(\hC)$ is replaced by the optimal empirical condition number, $\hkappa(I)=d$. As we mentioned above, this gap tends to be huge in practice.

As we discuss in \secref{sec:related}, standard bounds for this setting depend on the geometry of $\cX$ and $\cW$. On the contrary, it follows from the generalized Cauchy-Schwarz inequality that for any choice of a norm $\|\cdot\|$ on $\reals^d$, our bound applies to the sets\footnote{In fact, under mild additional assumptions on $\cX$, any two sets $\cX$ and $\cW$ that satisfy our assumptions can be presented in this way. Assume that $\cX$ is symmetric (i.e., $x \in \cX$ iff $-x \in \cX$) and $0 \in \inter(\cX)$. Then it is known (\cite{conway2013course}) that $\cX$ induces a norm on $\reals^d$ through the Minkowsky functional
\[
\|x\| := p(x) = \inf\,\{t \in \reals: x \in tB\}~.
\]
It is immediate that the closed unit ball $\{x: \|x\| \le 1\}$ is $\cX$ itself. Therefore, the dual norm is simply the support function of $\cX$
\[
\|w\|^\star = \max_{x \in \cX} w^\top x ~.
\]
It follows that $\cX$ and $\cW$ can be described as in (\ref{eq:polar}).} 
\begin{equation} \label{eq:polar}
\cX = \cB_{\|\cdot\|} = \{x \in \reals^d: \|x\| \le 1\},~~\cW = Y \cB_{\|\cdot\|^\star} :=\{w \in \reals^d: \|w\|^\star \le Y\}
\end{equation}

\subsection{The Average Stability of Stochastic Gradient Descent} \label{sec:SGD}
One of the most widely used algorithms in machine learning is Stochastic Gradient Descent (SGD). Besides its computational simplicity, its popularity stems also from its generalization abilities (\cite{shalev2014understanding}[Section 14.5]). Recently, \cite{hardt2015train} studied the (uniform) stability of SGD in various settings. Following our notation, theorem 3.9 of their paper implies a bound of $\max_i\frac{2\hat{\rho}_i^2}{\gamma n}$ on the uniform stability, where $\gamma$ is the strong convexity of the entire objective, and for any $i \in [n]$, $\hrho_i$ is the Lipschitz parameter of $\hell_i$. As the proof of \lemref{lem:stabilityKappa} reveals, $\gamma$ can be bounded by $\alpha \hkappa(\hC)$ and $\hrho_i$ is at most $\hrho^2 \|x_i\|^2$. In particular, the bound depends on the choice of the coordinate system.

As implied by \cite{bubeck2015convex}[Theorem 3.2], SGD can be viewed in our context as an (approximate) ERM. Hence, the average stability of SGD is invariant to the choice of the coordinate system and the stability rate of SGD is bounded as in \corref{cor:main}.


\section{Related Work} \label{sec:related}
\subsection{Slower rates} \label{sec:slow}
One of the most direct techniques for establishing bounds on the
excess risk is via analyzing the Rademacher complexity
(\cite{bartlett2003rademacher}) of the associated class of
predictors. In our setting, these techniques have been employed by
\cite{kakade2009complexity} to establish bounds of order $1/\sqrt{n}$
on the generalization error of ERM.  We refer to these rates as slower
due to the inferior dependence on the sample size $n$. 

\subsection{Dependence on Norm}
Note that since both the uniform and the average stability of ERM are bounded above by its generalization error (\cite{shalev2010learnability}), the bounds of \cite{kakade2009complexity} translate into bounds on the average stability.

Unlike our fast rates, the exact bounds depend on the geometry of the
set $\cX$ and $\cW$. For example: a) If both $\cX$ and $\cW$ are the
Euclidean unit ball in $\reals^d$, then the obtained bound scales with
$1/\sqrt{n}$. b) If $\cX$ is the unit $\ell_\infty$-ball and $\cW$
is the $\ell_1$-ball, then the obtained bound scales with $\sqrt{\log(d)/n}$.




\subsection{Lower bounds on the excess risk}
Lower bounds for stochastic minimization of exp-concave functions have been studied in \cite{mahdavi2015lower}. For our setting, theorem 2 in this paper implies a bound of $\Omega(d/n)$ on the excess risk of any algorithm. 

For the special case of linear regression with 
\begin{equation} \label{eq:ohadSetting}
\cX = \{x \in \reals^d : \|x\|_2 \le 1\},~\cW = \{w \in \reals^d: \|w\|_2 \le B\},~\cY = [-Y,Y]
\end{equation}
\cite{shamir2014sample} proved the lower bound $\Omega \left( \min
  \{Y^2, \frac{B^2 + dY^2}{n}, \frac{BY}{\sqrt{n}} \} \right)$ on the
generalization error of ERM. The left-most term is trivially attained by the predictor $w=0$. The middle term is attained by combining the Vovk-Azoury-Warmuth forecaster (\cite{azoury2001relative,vovk2001competitive}) with standard online-to-batch conversions (\cite{cesa2004generalization}). Last, the right term is attained by ERM, as implied by the aforementioned upper bound of \cite{kakade2009complexity}. 

It is left open whether the middle term in the lower bound is attained by ERM. Note that if $B = \omega( \sqrt{d}Y)$, then the middle term in the above lower bound is asymptotically larger than our upper bound. However, since in the setting of \cite{shamir2014sample} (\eqref{eq:ohadSetting}) the magnitude of the predictions is not uniformly upper bounded by $Y$, no contradiction arises.




\subsection{Stability and Regularization}
Previous work (\cite{bousquet2002stability,shalev2010learnability}) studied the rate of uniform stability in various settings. For our setting, their bounds on the expected risk are identical to the bound in \lemref{lem:stabilityKappa}. As we explained above, these fast rates are often worse than the so-called slower rates due to the dependence on the empirical condition number. The standard approach for tackling this problem is add a regularization term. By adding the regularization term $\lambda \|w\|^2$ to the objective, one effectively increases the eigenvalues of $\hC$ by $\lambda$, and consequently, the overall condition number is decreased. However, as explained in \cite{shalev2014understanding}[Section 13.4], this modification usually does not preserve the fast rates.\footnote{Namely, when tuning $\lambda$, we need to ensure that any $\epsilon/2$-approximate minimizer with respect to the regularized objective is also an $\epsilon$-approximate minimizer with respect to the unregularized objective. As explained in \cite{shalev2014understanding}[Section 13.3], by optimally controlling this tradeoff, we no longer obtain fast rates (i.e., the stability rate scales with $1/\sqrt{n}$ rather than $1/n$).}

\subsection{Stability and Exp-concavity}
Informally, exp-concavity can be seen as a local and weaker form of strong convexity. Indeed, the Online Newton Step (ONS) of \cite{hazan2007logarithmic}, which has been designed for online minimization of exp-concave functions, achieves improved (logarithmic) regret bounds that resemble the regret bounds for strongly convex functions (\cite{hazan2007logarithmic}). The online-to-batch analysis of \cite{mahdavi2015lower} yields a bound on the excess risk that coincides with our bounds up to logarithmic factors. The main shortcoming of the ONS algorithm is that it employs expensive iterations (the runtime per iteration scales at least quadratically with $d$). Hence, it is natural to ask whether there exist simpler algorithms that achieve fast rates.

This question was answered affirmatively by \cite{koren2015fast}. This work, which is most closely related to our work, considers the minimization of a risk of the form $F(w) = \bE [f(w,Z)]$, where for any $z$, $f(\cdot,z)$ is $\bar{\beta}$-smooth\footnote{That is, the maximal eigenvalue of the Hessian of $f$ at any point $w$ is at most $\beta$.} and $\bar{\alpha}$-exp-concave function. They established fast rates for any algorithm that minimizes the \emph{regularized} risk $\hL(w) + \frac{1}{n}R(w)$, where $R(w)$ is assumed to be a $1$-strongly convex function (e.g., one can set $R(w)=\frac{1}{2} \|w\|^2$). While exp-concavity is weaker than strong convexity, \cite{koren2015fast}[section 4.2] interprets exp-concavity as strong convexity in the (local) norm induced by the outer products of the gradients and the regularization term. In other words, the problem is well-conditioned with respect to this local norm. Note that their formulation is more general in the sense that they do not assume a GLM structure. However, it should be emphasized that all the known exp-concave functions in machine learning are of the form (\ref{eq:risk})).

The above interpretation of \cite{koren2015fast} inspired us to make one step forward and directly show that regularization is not required as long as a related (preconditioned) problem is well conditioned. Besides the obvious importance of showing the insignificance of regularization in this context, we believe that the notion of preconditioned stability and its relation to the excess risk make these ideas more transparent and simplify the proofs.  

The upper bound of \cite{koren2015fast} on the excess risk scales with $\frac{24 \beta d}{\bar{\alpha} n}=\frac{24 \beta d \rho^2}{\alpha n}$ (recall that the exp-concavity parameter $\bar{\alpha}$ is equal to $\alpha/\rho^2$). Note that our analysis does not assume smoothness of the loss. This resolves the question raised by \cite{koren2015fast} regarding the necessity of the smoothness assumption. Note that for linear regression, the smoothness is $1$, making our bounds identical to the bounds of \cite{koren2015fast} for this special case. 

As discussed in \cite{koren2015fast}, it is difficult to translate bounds on the average stability into high-probability bounds (while preserving the fast rate and introducing only logarithmic dependence on $1/\delta$).

\subsection{Other Techniques and High-Probability Bounds} \label{sec:highProb}
The bound on the expected excess risk in \corref{cor:main} can be obtained by using two additional techniques. Both of these techniques also yield high probability bounds. We next survey the corresponding results. 

A recent follow-up work by \cite{mehta2016exp} established a bound of $\tilde{O}(d\log(n)+ \log(1/\delta)/n)$ on the excess risk of ERM, where $\delta$ is the confidence parameter.\footnote{The dependence on the exp-concavity parameter as well as the diameter of the loss function is hidden.}. He also showed how to get rid of the $\log(n)$ factor by boosting the confidence of ERM. The proof is centered around a Bernstein condition which holds due to the exp-concavity assumption.

Another alternative, is to bound the excess risk by the local Rademacher complexity (LRC) of the associated class of predictors (e.g., using Corollary 5.3 in \cite{bartlett2005local}). In our setting, one can derive bounds on the LRC (e.g., using \cite{koltchinskii2008oracle}) which coincide with our bounds. 

All of these techniques employ arguably heavy machinery and lack the geometric interpretation, which is nicely captured by our notion of preconditioned stability.


\section*{Acknowledgments}
We thank Iliya Tolstikhin for pointing out the alternative proof of \corref{cor:main} using local Rademacher complexities. 
\newpage
\bibliographystyle{plain}
\bibliography{bib}
\newpage 
\appendix


\end{document}